\newcommand{\ie}{i.e.,\ }
\newcommand{\eg}{e.g.,\ }
\newcommand{\NameOne}[1][]{Privacy Regularized Policy#1\ }
\newcommand\indep{\protect\mathpalette{\protect\independenT}{\perp}}
\def\independenT#1#2{\mathrel{\rlap{$#1#2$}\mkern2mu{#1#2}}}
\newtheorem{thm}{Theorem}
\newtheorem{definition}{Definition}
\newtheorem{lemma}{Lemma}
\newtheorem{coro}{Corollary}
\DeclareMathOperator*{\esssup}{ess\,sup}
\DeclareMathOperator*{\argmin}{arg\,min}
\begin{document}
\twocolumn[

\aistatstitle{Utility/Privacy Trade-off through the lens of Optimal Transport}

\aistatsauthor{Etienne Boursier \And Vianney Perchet}

\aistatsaddress{\parbox{8cm}{\scriptsize \centering Université Paris-Saclay, ENS Paris-Saclay \\ CNRS, Centre Borelli, Cachan, France} \\ \scriptsize \texttt{eboursie@ens-paris-saclay.fr} \And
	\parbox{8cm}{\scriptsize \centering CREST, ENSAE Paris, Palaiseau, France \\ Criteo AI Lab, Paris, France} \\\scriptsize \texttt{vianney.perchet@normalesup.org}} 
]

\begin{abstract}
%Private Learning aims at using private information in order to receive the best possible payoff, without revealing this information. While classical frameworks avoid any adversary to retrieve specific information in databases with sensitive data, we introduce the new problem of \textit{\NameOne[,]}where this information has some finite value. Motivated by strategic applications as auctions for online advertisement, our model leads to satisfying solutions which adapt to the problem features. Any information is leaked only if it leads to a significant increase in the reward. For specific privacy costs, this problem presents satisfying properties of convexity and finiteness of the solution. Also, particular cases of this problem are related to theories of  or Difference of Convex functions (DC) programming and it can then benefit from their recent advances. 
Strategic information is valuable either by remaining private (for instance if it is sensitive) or, on the other hand, by being used publicly to increase some utility. These two objectives are antagonistic and leaking this information might be more rewarding than concealing it. Unlike classical solutions that focus on the first point, we consider instead agents that optimize a natural trade-off between both  objectives.
We formalize this as an optimization problem where the objective mapping is regularized by the amount of information revealed to the adversary (measured as a divergence between the prior and posterior on the private knowledge). Quite surprisingly, when combined with the entropic regularization, the Sinkhorn loss naturally emerges in the optimization objective, making it efficiently solvable. We apply these techniques to preserve some privacy in online repeated auctions.

\end{abstract} 

%\begin{keywords}
% Privacy Learning, Optimal Transport, Non-Convex Optimization
%\end{keywords}
\section{Introduction}

In many economic mechanisms and strategic games involving different agents, asymmetries of information (induced by a private type, some knowledge on the hidden state of Nature, etc.) can and should be leveraged to  increase one's utility. When these interactions between agents are repeated over time, preserving some asymmetry (i.e., not revealing private information) can be  crucial to guarantee a larger utility in the long run. Indeed, the small short term utility of publicly using information can be overwhelmed by the long term effect of revealing it \citep{aumann1995}.

Informally speaking, an agent should use, and potentially reveal,  some private information only if she gets a subsequent utility increase in return. Keeping this information private is no longer a constraint \citep[as in other classical privacy concepts such as differential privacy][]{dwork2006} but becomes part of the objective, which is then to decide how and when to use it. For instance, it might happen that revealing everything is optimal or, on the contrary, that a non-revealing policy is the best one.
This is roughly similar to a poker player deciding whether to bluff or not. In some situations, it might be interesting to focus solely on the utility even if it implies losing the whole knowledge advantage, while in other situations, the immediate profit for using this advantage is so small that playing independently of it (or bluffing) is better.

After a rigorous mathematical formulation of this utility vs.\ privacy trade-off, it appears that this problem can be recast as a regularized optimal transport minimization. In the specific case of entropic regularization, this problem has received a lot of interest in the recent years as it induces a computationally tractable way to approximate an optimal transport distance between distributions and has thus been used in many applications \citep{Cuturi2013}. Our work showcases how the new \NameOne problem benefits in practice from this theory.
\vspace{-0.5em}
\paragraph{Private Mechanisms.}
Differential privacy is the most widely used private learning framework \citep{dwork2011, dwork2006, reed2010} and ensures that the output of an algorithm does not significantly depend on a single element of the whole dataset. These privacy constraints are often too strong for economic applications (as illustrated before, it is sometimes optimal to disclose publicly some private information). $f$-divergence privacy costs have thus been proposed in recent literature as a promising alternative \citep{chaudhuri2019}. These $f$-divergences, such as Kullback-Leibler, are also used by economists to  measure the cost of information from a Bayesian perspective, as in the rational inattention literature \citep{sims2003, matvejka2015, mackowiak2015}. It was only recently that  this approach has been considered to measure  ``privacy losses'' in economic mechanisms \citep{Mu2019}. This model assumes that the designer of the mechanism  has some prior belief on the unobserved and private information. After observing the action of the player, this belief is updated and the cost of information corresponds to the KL between the prior and posterior distributions of this private information.

%It prevents any private value of an individual to be inferred from a public aggregate information. It can be achieved  by adding random noise to either the input, the output or the gradient during the learning process \cite{chaudhuri2011, duchi2013, jayaraman2019}. Its popularity is due to its simple, clear mathematic formulation yet with strong privacy guarantees \citep{broadening}. However, its constraints are too restrictive for many problems and can lead to a low average utility \citep{jayaraman2019}.
%Some relaxations of the differential privacy have been proposed to overcome this point \citep{Smith2009, BunS16, mironov2017, jayaraman2019}. In particular, it is possible to consider average privacy leakage instead of ($\delta-$)worst case \citep{dwork2016, BunS16}, or to consider only inference from some (known) prior instead of all adjacent databases \citep{Smith2009, Issa2016}. %Our framework combines these two relaxations besides adding the privacy cost as a regularization term to the objective. It can however be extended to more complex privacy guarantees.

%The objective of private learning is to limit adversarial inference attacks. Different types of attack exist. Here, we aim at preventing attribute inference attacks \citep{fredrikson2014, yeom2018, mironov2017}, whose goal is to infer the value of private features given public observation. We thus compare the posterior and prior distributions of the private information, as done in the rational inattention literature \cite{sims2003, matvejka2015, mackowiak2015, Mu2019}.

Optimal privacy preserving strategies with privacy constraints have been recently studied in this setting under specific conditions \citep{Mu2019}. Loss of privacy can however be directly considered as a cost in the overall objective and an optimal strategy  reveals information only if it actually leads to a significant increase in utility, whereas constrained strategies systematically reveal as much as allowed by the constraints, without incorporating the additional cost of this revelation.
%Also, previous works in private learning often lead to uniformly add noise to some data, while the noise in our model will be neatly chosen to provide the best possible utility.
\vspace{-0.5em}
\paragraph{Optimal Transport.}
Finding an appropriate way to compare probability distributions  is a major challenge in learning theory. Optimal Transport manages to provide powerful tools to compare distributions in metric spaces \citep{villani2008}. As a consequence, it has received an increasing interest these past years \citep{santambrogio2015}, especially for generative models \citep{arjovsky17, genevay18a, salimans2018}. However, such powerful distances often come at the expense of heavy and intractable computations, which might  not be suitable to learning algorithms. It was recently showcased that adding an entropic regularization term enables fast computations of approximated distances using Sinkhorn algorithm \citep{Sinkhorn67, Cuturi2013}. Since then, the Sinkhorn loss has also shown promising results for applications such as generative models \citep{genevay2016, genevay18a}, domain adaptation \citep{Courty14} and supervised learning \citep{Frogner15}, besides having nice theoretical properties \citep{OTbook, Feydy2018, genevay2018b}.
\vspace{-0.5em}
\paragraph{Contributions and Organization of the paper.}
The new framework of \NameOne is motivated by several applications, presented in Section~\ref{SE:Applications} and is formalized in Section~\ref{SE:model}. This problem is mathematically formulated as some optimization problem (yet eventually in an infinite dimensional space), which is convex if the privacy cost is an $f$-divergence, see Section \ref{sec:convex}. Also, if the private information space is discrete, this problem admits an optimal discrete distribution. The minimization problem then becomes dimensionally finite, but non-convex.

If  the Kullback-Leibler divergence between the prior and the posterior is  considered for the cost of information, the problem becomes a Sinkhorn loss minimization problem.  Optimal transport techniques are developed in  Section \ref{sec:transport}  (based on recent machinery) to compute partially revealing policies. Finally, with a linear utility cost, the problem is equivalent to the minimization of the difference of two convex functions. Using the theories of these specific problems, different optimization methods can be compared, which illustrates the practical aspect of our new model. This is done in Section~\ref{sec:expe} where we also compute partially revealing strategies for repeated auctions.
\vspace{-0.5em}
\section{Some Applications}\label{SE:Applications}
\vspace{-0.5em}
Our model is motivated by different applications described in this section: online repeated auctions and learning models on external servers.
\vspace{-0.5em}
\subsection{Online repeated auctions}

When a website wants to sell an advertisement slot, firms such as Google or Criteo take part in an auction to buy this slot for one of their customer, a process illustrated in Figure~\ref{fig:advertisement}. As this interaction happens each time a user lands on the website, this is no longer a one-time auction problem, but repeated auctions where the seller and/or the competitor might observe not just one bid, but a distribution of bids. As a consequence, if a firm were bidding truthfully, seller and other bidders would have access to its true value distribution $\mu$. This has two possible downsides.

First, if the value distribution $\mu$ was known to the auctioneer, she could maximize her revenue at the expense of the bidder utility \citep{amin2013, amin2014, feldman2016, golrezaei2018}, for instance with personalized reserve prices. Second, the auctioneer can sometimes take part in the auction and becomes a direct concurrent of the bidder (this might be a unique characteristic of online repeated auctions for ads). For instance, Google is both running some auction platforms and bidding on some ad slots for their client. As a consequence, if the distribution $\mu$ was perfectly known to some concurrent bidder, he could use it in the future, by bidding more or less aggressively or by trying to conquer new markets.

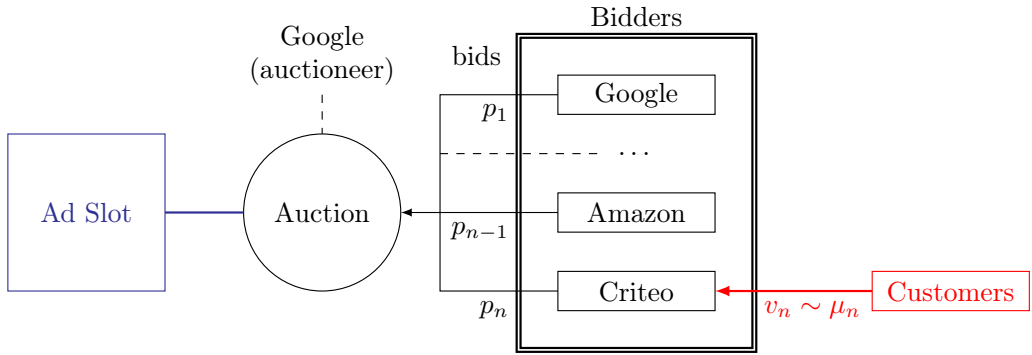
\begin{figure*}[t]
\center
\resizebox{0.8\linewidth}{!}{
\begin{tikzpicture}[every text node part/.style={align=center}]
% Auction/Ad Slot and auctioneer part
\draw (5.5,0) circle (1) ;
\draw (5.5,0) node{Auction} ;
\draw [Blue, thick] (3.5,0) -- (4.5,0) ;
\draw [Blue] (1.5, -1) rectangle (3.5, 1) ;
\draw [Blue] (2.5, 0) node{Ad Slot} ;
\draw (5.5, 2) node{Google\\ (auctioneer)} ;
\draw [dashed] (5.5, 1.5) -- (5.5, 1) ;

% bidder rectangles
\draw [thick, double] (8, -1.75) rectangle (11, 2.25) ;
\draw (8.5, -1.25) rectangle (10.5, -0.75) ;
\draw (8.5, -0.25) rectangle (10.5, 0.25) ;
\draw (8.5, 1.75) rectangle (10.5, 1.25) ;
% bidder names
\draw (9.5, -1) node{Criteo} ;
\draw (9.5, 1.5) node{Google} ;
\draw (9.5, 0) node{Amazon} ;
\draw (9.5, 0.75) node{\ldots} ;
\draw (9.5, 2.25) node[above]{Bidders} ;

%\draw [>=latex, <-, double, thick] (7, 0) -- (8, 0) ;
\draw [dashed] (9, 0.75) -- (7, 0.75) ;
\draw (7.9, 1.75) node[above left]{bids} ;
\draw (8.5, 1.5) -- (7, 1.5) -- (7, 0) ;
\draw (8, 1.5) node[below left]{$p_1$} ;
\draw (8.5, -1) -- (7, -1) -- (7, 0) ; %Criteo arrow
\draw (8, -1) node[below left]{$p_n$} ;
\draw (8.5, 0) -- (7, 0) ;
\draw (8, 0) node[below left]{$p_{n-1}$} ;
\draw [>=latex, ->] (7, 0) -- (6.5, 0) ;
%\draw [double, dashed] (9.5, -1.75) -- (9.5, -2) -- (6.25, -2) ; % old link auctioneer -- bidder

\draw [red] (12.5, -1.25) rectangle (14.5, -0.75) ;
\draw [red] (13.5, -1) node{Customers} ;
\draw [red, thick, >=latex, ->] (12.5, -1) -- (10.5, -1) ;
\draw [red] (11, -1) node[below right]{$v_n \sim \mu_n$} ;

\end{tikzpicture}}
\caption{\label{fig:advertisement} Online advertisement auction system.}
\end{figure*}

It is also closely related to online pricing or repeated posted price auctions. When a user wants to buy a flight ticket (or any other good), the selling company can learn the value distribution of the buyer and then dynamically adapts its prices in order to increase its revenue. The user can prevent this behavior in order to maximize her long term utility, even if it means refusing some apparently good offers in the short term (in poker lingo, she would be ``bluffing'').

As explained in Section~\ref{sec:toy} below, finding the best possible long term strategy is intractable, as the auctioneer could always adapt to the bidding strategy, leading to an arm race where each bidder and auctioneer successively adapts to the other one’s strategy. Such an arm race is instead avoided by trading-off between the best possible response to the auctioneer’s fixed strategy as well as the leaked quantity of information. The privacy loss then aims at bounding the incurred loss in bidder's utility if the auctioneer adapts her strategy using the revealed information.
\vspace{-0.5em}
\subsection{Learning through external servers}
\vspace{-0.5em}
Nowadays, several servers or clusters  allow their clients to perform heavy computations remotely,  for instance to learn some model parameters (say a deep neural net) for a given training set. The privacy concern when querying a server can sometimes be handled using homomorphic encryption \citep{gilad2016, bourse2018, sanyal2018}, if the cluster is designed in that way (typically a public model has been learned on the server). In this case, the client sends an encrypted testing set to the server, receives encrypted predictions and locally recovers the accurate ones. This technique, when available,  is  powerful, but requires heavy local computations. 

Consider instead a client wanting to learn a new model (say, a linear/logistic regression or any neural net)  on a dataset that has some confidential component.  Directly sending the  training set would reveal the whole data to the server owner, besides the risk of someone else observing it. The agent might instead prefer to send perturbed datasets, so that the computed model remains close to the accurate one, while keeping secret the true data. If the data contain sensitive information on individuals, then differential privacy is an appropriate solution. However, it is often the case that the private part is just a single piece of information of the client itself (say, its margin, its current wealth or its total number of users for instance) that is crucial to the final learned model but should not be totally revealed to a competitor.  Then differential privacy is no longer the solution, as there is only a single element to protect and/or to use.  Indeed, some privacy leakage is allowed and can lead to much more accurate parameters returned by the server and a higher utility at the end; the \NameOne aims at computing the best dataset to send to the server, in order to maximize the utility-privacy trade-off.
\vspace{-0.5em}
\section{Model}
\label{SE:model}
\vspace{-0.5em}
We first introduce a simple toy example in Section~\ref{sec:toy} giving insights into the more general problem, whose formal and general formulation is given in Section~\ref{sec:model}. 
\vspace{-0.5em}
\subsection{Toy Example}
\label{sec:toy}
 \vspace{-0.5em}
Suppose an agent is publicly playing an action $x \in \mathcal{X}$ to minimize a loss  $x^\top \!c_k$, where $c_k$ is some loss vector. The true type $k \in [K]$ is only known to the agent and drawn from a prior $p_0$. Without privacy concern, the agent would then solve for every $k$:
%\begin{equation}
%\label{eq:noprivacy}
\ $\min_{x \in \mathcal{X}} x^\top \!c_k.$ \\
%\end{equation}
Let us denote by $x^*_k$ the optimal solution of that problem. Besides maximizing her reward, the agent actually wants to protect the secret type $k$. After observing the action $x$ taken by the agent, an adversary can update her posterior distribution of the hidden type $p_x$. 

If the agent were to play deterministically $x^*_k$ when her type is $k$, then the adversary could infer the true value of $k$ based on the played action. The agent should instead choose her action randomly to hide her true type to the adversary. Given a type $k$, the strategy of the agent is then a probability distribution $\mu_k$ over $\mathcal{X}$  and her expected reward is $\mathbb{E}_{x \sim \mu_k} \big[ x^\top c_k\big]$. In this case, the posterior distribution after playing the action $x$ is computed using Bayes rule and if the different $\mu_k$ have overlapping supports, then the posterior distribution is no longer a Dirac mass, \ie some asymmetry of information is maintained.

The agent aims at simultaneously minimizing both the utility loss and the amount of information given to the adversary. A common way to measure the latter is given by the Kullback-Leibler (KL) divergence between the prior and the posterior \citep{sims2003}: $\mathrm{KL}(p_x, p_0) = \sum_{k=1}^K \log\left( \frac{p_x(k)}{p_0(k)} \right) p_x(k)$, where $p_x(k)= \frac{p_0(k) \mu_k(x)}{\sum_{l=1}^K p_0(l) \mu_l(x)}$. If the information cost scales in utility with $\lambda>0$, the regularized  loss of the agent is then  $x^\top \!c_k + \lambda \mathrm{KL}(p_x, p_0)$ instead of $x^\top \!c_k$. Overall, the global objective of the agent is the following minimization:
%of $\sum_{k=1}^K p_0(k) \mathbb{E}_{x \sim \mu_k} \big[ x^\top \!c_k + \lambda \mathrm{KL}(p_x, p_0)\big]$ over $\mu_1,\ldots,\mu_K$.
\begin{small}\begin{equation*}
\min\limits_{\mu_1, \ldots, \mu_K} \sum\limits_{k=1}^K p_0(k) \mathbb{E}_{x \sim \mu_k} \big[ x^\top \!c_k + \lambda \mathrm{KL}(p_x, p_0)\big].
\end{equation*}\end{small}%
In the limit case $\lambda = 0$, the agent follows a totally revealing strategy and deterministically plays $x^*_k$ given~$k$. When $\lambda = \infty$, the agent focuses on perfect privacy and looks for the best  action  chosen independently of the type: $x \indep k$. It  corresponds to a so called non-revealing strategy in game theory and the best strategy is then to play $\argmin_{x} x^\top \! c[p_0]$ where $c[p_0] = \sum_{k=1}^K p_0(k) c_k$. For a positive $\lambda$, the behavior of the player will then interpolate between these two extreme strategies.
%, where different players have private information and try to maximize their utility, which also depends on some private information \cite{aumann1995}. 

This problem is related to repeated games with incomplete information \citep{aumann1995}, where players have private information affecting their utility functions. Playing  some action leaks information to the other players, who then change their strategies in consequence. The goal is then to control the amount of information leaked to the adversaries in order to maximize one's own utility.
%In this case, a usual strategy consists in a splitting strategy, \ie a player will generate some posterior probability of her type and will maximize her reward in expectation over this posterior. 
In practice, it can be impossible to compute the best adversarial strategy, \eg the player is unaware of how the adversaries would~adapt. The utility loss caused by adversarial actions is then modeled as a function of the amount of revealed information.
\vspace{-0.5em}
\subsection{General model}
\vspace{-0.5em}
\label{sec:model}
We now introduce formally the general model sketched by the previous toy example. The agent (or player) has a private type $y \in \mathcal{Y}$ drawn according to a prior $p_0$ whose support can be infinite. She then chooses an action $x \in \mathcal{X}$ to maximize her utility, which depends on both her action and her type. Meanwhile, she wants to hide the true value of her type $y$. A strategy is thus a mapping $\mathcal{Y} \to \mathcal{P}(\mathcal{X})$, where $\mathcal{P}(\mathcal{X})$ denotes the set of distributions over $\mathcal{X}$; for the sake of conciseness, we denote by ${X|Y \in \mathcal{P}(\mathcal{X})^{\mathcal{Y}}}$ such a strategy. In the toy example, this mapping was given by $k \mapsto \mu_k$.
The adversary observes her action $x$ and tries to infer the type of the agent. We assume a perfect adversary, \ie she can exactly compute the posterior distribution $p_x$.

Let $c(x,y)$ be the utility loss for playing $x \in \mathcal{X}$ with the type $y \in \mathcal{Y}$. 
The cost of information is $c_{\text{priv}}(X, Y)$ where $(X, Y)$ is the joint distribution of the action and the type. %The cost functions $c$ and $c_{\text{priv}}$ are both assumed to be known. 
In the toy example given in Section~\ref{sec:toy}, the utility cost was given by $c(x, k) = x^\top c_k$ and the privacy cost was the expected KL divergence between $p_x$ and $p_0$. 
%The privacy cost of differential privacy is 
%%\begin{equation*}c_{\text{priv}}(X, Y) = \max\limits_{\substack{(x, y, y')\\ d_h(y,y')=1}} \Big|\log\Big(\frac{\mathbb{P}(x\ |\ Y=y))}{\mathbb{P}(x\ |\ Y=y'))}\Big)\Big|, \end{equation*}
%\begin{small} \begin{equation*} c_{\text{priv}}(X, Y) = \max\limits_{(x, y, y')} \Big\{ \big|\log \frac{\mathbb{P}(x\ |\ Y=y))}{\mathbb{P}(x\ |\ Y=y'))}\big|\ ;\ d_h(y,y')=1  \Big\} \end{equation*} \end{small}where $d_h$ is the Hamming distance \cite{dwork2011}.
The previous frameworks aimed at minimizing the utility loss with a privacy cost below some threshold $\varepsilon > 0$, \ie minimize $\mathbb{E}_{(x,y) \sim (X,Y)}\big[ c(x,y) \big]$ such that $c_{\text{priv}}(X, Y) \leq \varepsilon$.
%\citep{Mu2019}. 
Here, this privacy loss has some utility scaling with $\lambda>0$, which can be seen as the value of information.
The final objective of the agent is then to minimize the following loss:
\begin{small}\begin{equation}
\label{eq:staticmin0}
\inf\limits_{X|Y \in \mathcal{P}(\mathcal{X})^{\mathcal{Y}}} \mathbb{E}_{(x,y) \sim (X,Y)}\big[ c(x,y) \big] +  \lambda \ c_{\text{priv}}(X, Y).
\end{equation}\end{small}%
As mentioned above, the cost of information is here defined as a measure between the posterior $p_x$ and the prior distribution $p_0$ of the type, \ie $c_{\text{priv}}(X, Y) = \mathbb{E}_{x \sim X} D(p_x, p_0)$ for some function $D$\footnote{We here favor ex-ante costs as they suggest that the value of information can be heterogeneous among types.}. In the toy example of Section~\ref{sec:toy}, $D(p_x, p_0) = \mathrm{KL}(p_x, p_0)$, which is a classical cost of information in economics.

For a distribution $\gamma \in \mathcal{P}(\mathcal{X}\times\mathcal{Y})$, we denote by $\pi_{1 \#} \gamma$ (resp. $\pi_{2 \#} \gamma$) the marginal distribution of $X$ (resp. $Y$):
$
\pi_{1 \#} \gamma(A) = \gamma(A \times \mathcal{Y}) \text{ and } \pi_{2 \#} \gamma(B) = \gamma(\mathcal{X} \times B).$ In order to have a simpler formulation of the problem, we remark that instead of defining a strategy by the conditional distribution $X|Y$, it is equivalent to see it as a joint distribution $\gamma$ of $(X,Y)$ with a marginal over the type equal to the prior: $\pi_{2 \#} \gamma = p_0$. 
The remaining of the paper focuses on the problem below, which we call \textbf{\NameOne[.]} With the privacy cost defined as above, the minimization problem~\eqref{eq:staticmin0} is equivalent to 
\begin{small}
\begin{equation}
\label{eq:staticmin1} \tag{PRP}
\inf_{\substack{\gamma \in \mathcal{P}\left(\mathcal{X} \times \mathcal{Y} \right) \\ \pi_{2 \#} \gamma = p_0}} \int_{\mathcal{X} \times \mathcal{Y}} \!\!\!\!\!\! \left[ c(x,y) + \lambda \ D \! \left( p_x, p_0 \right) \right] \mathrm{d}\gamma(x,y).
\end{equation}\end{small}
\vspace{-1.5em}
\section{A convex minimization problem}
\label{sec:convex}
\vspace{-0.5em}
In this section, we study some theoretical properties of the Problem~\eqref{eq:staticmin1}. We first recall the definition of an $f$-divergence.

\begin{definition}
$D$ is an $f$-divergence if for all distributions $P, Q$ such that $P$ is absolutely continuous w.r.t.~$Q$, $D (P, Q) = \int_\mathcal{Y}f\left( \frac{\mathrm{d}P(y)}{\mathrm{d}Q(y)}\right)\mathrm{d}Q(y)$ where $f$ is a convex function defined on $\mathbb{R}_+^*$ with $f(1)=0$.
\end{definition}
%The assumption of absolute continuity is not significant here, as the posterior is always absolutely continuous with respect to the prior. 
The set of $f$-divergences includes common divergences such as the Kullback-Leibler divergence, the reverse Kullback-Leibler or the Total Variation distance. 

Also, the min-entropy defined by $D(P, Q) = \log\left(\esssup \mathrm{d}P/\mathrm{d}Q\right)$ is widely used for privacy \citep{toth2004, Smith2009}. It corresponds to the limit of the Renyi divergence  $\ln\left( \sum_{i=1}^n p_i^\alpha q_i^{1-\alpha} \right)/(\alpha-1)$, when $\alpha \to + \infty$ \citep{renyi1961, mironov2017}. Although it is not an $f$-divergence, the Renyi divergence derives from the $f$-divergence associated to the convex function $t \mapsto (t^{\alpha}-1)/(\alpha-1)$.
%The set of $f$-divergences does not include all the possible privacy costs, but it still is a large part of them, including the classical cost of information given by the KL. 
$f$-divergence costs have been recently considered in the computer science literature in a non-Bayesian case and then present the good properties of convexity, composition and post-processing invariance \citep{chaudhuri2019}.

In the remaining of the paper, $D$ is an $f$-divergence. \eqref{eq:staticmin1} then becomes a convex minimization problem.
\begin{thm}
\label{thm:convexprob}
If $D$ is an $f$-divergence, \eqref{eq:staticmin1} is a convex problem in $\gamma \in \mathcal{P}(\mathcal{X} \! \times \! \mathcal{Y})$\footnote{It is convex in a usual sense and not geodesically here.}.
\end{thm}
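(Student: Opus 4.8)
The plan is to split the objective of~\eqref{eq:staticmin1} into a term that is linear in $\gamma$ and a term that I will recognize as an $f$-divergence between two measures, each depending linearly on $\gamma$, and then to invoke the joint convexity of $f$-divergences. First, the feasible set $\{\gamma \in \mathcal{P}(\mathcal{X}\times\mathcal{Y}) : \pi_{2\#}\gamma = p_0\}$ is convex, being the intersection of the convex set of probability measures with the affine constraint $\pi_{2\#}\gamma = p_0$, and $\gamma \mapsto \int_{\mathcal{X}\times\mathcal{Y}} c\,\mathrm{d}\gamma$ is linear. It thus remains to prove that $\Phi(\gamma) := \int_{\mathcal{X}\times\mathcal{Y}} D(p_x,p_0)\,\mathrm{d}\gamma(x,y)$ is convex in $\gamma$.

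Second, I would write $\mu := \pi_{1\#}\gamma$ and disintegrate $\gamma$ along its first marginal, $\mathrm{d}\gamma(x,y) = \mathrm{d}\mu(x)\,\mathrm{d}p_x(y)$, where $p_x$ is precisely the posterior appearing in the objective. Since $\int_\mathcal{Y}\mathrm{d}p_x(y)=1$, this gives $\Phi(\gamma) = \int_\mathcal{X} D(p_x,p_0)\,\mathrm{d}\mu(x)$. When this quantity is finite one has $p_x \ll p_0$ for $\mu$-almost every $x$, so $\gamma \ll \mu\otimes p_0$ with Radon–Nikodym derivative $\tfrac{\mathrm{d}\gamma}{\mathrm{d}(\mu\otimes p_0)}(x,y) = \tfrac{\mathrm{d}p_x}{\mathrm{d}p_0}(y)$, whence
\[
\Phi(\gamma) \;=\; \int_{\mathcal{X}\times\mathcal{Y}} f\!\left(\frac{\mathrm{d}\gamma}{\mathrm{d}(\mu\otimes p_0)}(x,y)\right)\mathrm{d}(\mu\otimes p_0)(x,y) \;=\; D_f\!\big(\gamma,\ (\pi_{1\#}\gamma)\otimes p_0\big),
\]
the $f$-divergence between $\gamma$ and the product measure $(\pi_{1\#}\gamma)\otimes p_0$ (the identity persists, with the appropriate recession-function convention, when $\Phi(\gamma)=+\infty$). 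Crucially, this identity holds for every $\gamma$, not only feasible ones, since it only uses the disintegration against the first marginal and the fact that $p_0$ is fixed.

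Third, I would use the classical fact that $(P,Q)\mapsto D_f(P,Q)$ is jointly convex on pairs of nonnegative measures, which follows from the joint convexity of the perspective function $(s,t)\mapsto t\,f(s/t)$ on $\mathbb{R}_+\times\mathbb{R}_+^*$ (write both $P$ and $Q$ as densities against a common dominating measure and apply Jensen pointwise). Since $\gamma\mapsto\gamma$ and $\gamma\mapsto(\pi_{1\#}\gamma)\otimes p_0$ are both linear in $\gamma$, the map $\Phi = D_f\big(\cdot,\ (\pi_{1\#}\cdot)\otimes p_0\big)$ is convex as the composition of a jointly convex functional with a linear one. Adding the linear utility term and restricting to the convex feasible set yields the claim.

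The main obstacle is the rigorous justification, in the general setting of possibly infinite (say, Polish) type and action spaces, of the disintegration identity $\tfrac{\mathrm{d}\gamma}{\mathrm{d}(\mu\otimes p_0)}(x,y)=\tfrac{\mathrm{d}p_x}{\mathrm{d}p_0}(y)$, which requires invoking existence of regular conditional probabilities and checking the relevant measurability and almost-everywhere statements. A minor additional point is to handle cleanly the value $+\infty$: either observe that the extended-real-valued objective is then $+\infty$ and convexity is vacuous there, or work directly with the measure-theoretic $f$-divergence, which remains jointly convex as a $[0,+\infty]$-valued functional, so that the argument goes through verbatim. Once these technicalities are dispatched, convexity is a one-line consequence of the convexity of the perspective function.
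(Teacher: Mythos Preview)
Your proposal is correct and follows essentially the same route as the paper: both rewrite the privacy term as $\int f\bigl(\tfrac{\mathrm{d}\gamma}{\mathrm{d}(\pi_{1\#}\gamma\otimes p_0)}\bigr)\,\mathrm{d}(\pi_{1\#}\gamma\otimes p_0)$ and then derive convexity from the joint convexity of the perspective $(s,t)\mapsto t\,f(s/t)$. The only cosmetic difference is that you package this as ``$D_f$ is jointly convex in its two arguments, and both depend linearly on $\gamma$,'' whereas the paper applies the perspective inequality pointwise and integrates; your extra care about disintegration and the $+\infty$ case is welcome rigor that the paper leaves implicit.
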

The proof is given in Appendix~\ref{app:proofs}. Although $\mathcal{P}(\mathcal{X} \times \mathcal{Y})$ has generally an infinite dimension, it is dimensionally finite if both sets $\mathcal{X}$ and $\mathcal{Y}$ are discrete. A minimum can then be found using classical optimization methods such as gradient descent. In the case of low dimensional spaces $\mathcal{X}$ and $\mathcal{Y}$, they can be approximated by finite grids. However, the size of the grid grows exponentially with the dimension and another approach is needed for large dimensions of $\mathcal{X}$ and $\mathcal{Y}$.
\vspace{-0.5em}
\subsection{Discrete type space}
\label{sec:finite}
We assume here that $\mathcal{X}$ is an infinite action space and $\mathcal{Y}$ is of cardinality $K$ (or equivalently, that $p_0$ is  a discrete prior of size $K$), so that $p_0 = \sum_{k=1}^K p_0^k \delta_{y_k}$. %Notice that most of the continuous distributions are in practice approximated by their empirical observations, which define a discrete distribution.
For a fixed joint distribution $\gamma$, let the measure $\mu_k$ be defined for any $A \subset \mathcal{X}$ by $\mu_k(A) = \gamma(A \times \lbrace y_k \rbrace)$ and $\mu = \sum_{k=1}^K \mu_k =\pi_{1 \#} \gamma$. The function $p^k(x) = \frac{\mathrm{d} \mu_k(x)}{\mathrm{d}\mu(x)}$, defined over the support of $\mu$ by absolute continuity, is the posterior probability of having the type $k$ when playing $x$. In this specific setting, the tuple $(\mu, (p^k)_k)$ exactly determines $\gamma$. \eqref{eq:staticmin1} is then equivalent to:
\begin{small}
\begin{equation*}
%\label{eq:staticfinite1}
\begin{aligned}
\inf\limits_{\substack{\mu, (p^k(\cdot))_{1\leq k \leq K}\\ p^k \geq 0, \sum_{l=1}^K p^l(\cdot) = 1}} \!\!\!\!\!\! &
 \sum\limits_k \int_\mathcal{X}\big[ p^k(x) c(x, y_k) + \lambda p_0^k f\left( \frac{p^k(x)}{p_0^k}\right) \big] \mathrm{d}\mu(x) \\ \text{such that } & \forall k \leq K, \int_\mathcal{X} p^k(x) \mathrm{d}\mu(x) = p_0^k .
\end{aligned}
\end{equation*}\end{small}For fixed posterior distributions $p^k$, this is a generalized moment problem on the distribution $\mu$ \citep{lasserre2001}. The same types of arguments can then be used for the existence and the form of optimal solutions. 
%The proofs of Theorems~\ref{thm:caratheodory1} and \ref{thm:caratheodory2} are delayed to Appendix~\ref{app:proofs}.

\begin{thm}
\label{thm:caratheodory1}
If the prior is dicrete of size $K$, for all $\varepsilon > 0$, \eqref{eq:staticmin1} has an $\varepsilon$-optimal solution such that $\pi_{1 \#} \gamma = \mu$ has a finite support of at most $K+2$ points.  \\
Furthermore, if $\mathcal{X}$ is compact and $c(\cdot , y_k)$ is lower semi-continuous for every $k$, then it also holds for $\varepsilon=0$.
\end{thm}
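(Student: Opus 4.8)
The plan is to work with the reformulation of \eqref{eq:staticmin1} for a discrete prior stated just above the theorem, where a strategy is encoded by the pair $(\mu,(p^k)_k)$ with $\mu=\pi_{1\#}\gamma\in\mathcal P(\mathcal X)$ and $p^k(x)$ the posterior probability of type $k$. For a \emph{fixed} family of posteriors $(p^k(\cdot))_k$, the residual problem in $\mu$ is a linear program over $\mathcal P(\mathcal X)$ with the $K$ linear constraints $\int_{\mathcal X}p^k\,\mathrm d\mu=p_0^k$, i.e.\ a generalized moment problem in the sense of \citep{lasserre2001}. The objective is \emph{also} linear in $\mu$, with density $g(x):=\sum_k\big[p^k(x)c(x,y_k)+\lambda p_0^k f(p^k(x)/p_0^k)\big]$; hence, if $\mu$ is replaced by any probability measure matching the $K+1$ integrals $\int p^1\,\mathrm d\mu,\dots,\int p^K\,\mathrm d\mu,\int g\,\mathrm d\mu$, the new pair is still feasible with the same objective value. (The pointwise identity $\sum_k p^k\equiv1$ forces any such measure to have total mass $\sum_k p_0^k=1$, so being a probability measure is automatic; nonnegativity and normalization of the $p^k$ are pointwise and survive shrinking the support.)

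First I would treat $\varepsilon>0$. Pick a feasible $(\mu,(p^k)_k)$ with objective within $\varepsilon$ of the infimum (which we may assume finite, otherwise the statement is vacuous). By Jensen's inequality with weights $(p_0^k)_k$ and the constraint $\sum_k p^k(x)=1$, the privacy density $\sum_k p_0^k f(p^k(x)/p_0^k)\ge f(1)=0$ pointwise, so $g$ is bounded below as soon as each $c(\cdot,y_k)$ is (a mild assumption, automatic in the compact case below); since $\int g\,\mathrm d\mu<\infty$, we get $g\in L^1(\mu)$, and trivially $p^k\in L^1(\mu)$. The Carath\'eodory-type representation for generalized moment problems \citep{lasserre2001} --- the point $\big(\int p^1\,\mathrm d\mu,\dots,\int p^K\,\mathrm d\mu,\int g\,\mathrm d\mu\big)$ lies not just in the closed convex hull but in the convex hull of the curve $x\mapsto\big(p^1(x),\dots,p^K(x),g(x)\big)$ in $\mathbb R^{K+1}$, hence is a convex combination of at most $(K+1)+1=K+2$ of its points --- then produces a discrete $\tilde\mu$ on at most $K+2$ points with $\int p^k\,\mathrm d\tilde\mu=p_0^k$ for all $k$ and $\int g\,\mathrm d\tilde\mu=\int g\,\mathrm d\mu$. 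Restricting the $p^k$ to $\operatorname{supp}\tilde\mu$, the pair $(\tilde\mu,(p^k)_k)$ is feasible with the same objective, hence $\varepsilon$-optimal, and its joint law $\gamma$ satisfies $\pi_{1\#}\gamma=\tilde\mu$ with at most $K+2$ atoms.

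For $\varepsilon=0$ under $\mathcal X$ compact and $c(\cdot,y_k)$ lower semicontinuous, I would first prove the infimum is attained and then run the previous reduction on a minimizer. Existence is the direct method: $\mathcal P(\mathcal X\times\mathcal Y)$ is weak-$*$ compact, the constraint $\pi_{2\#}\gamma=p_0$ is weak-$*$ closed, and the objective is weak-$*$ lower semicontinuous. The utility part $\gamma\mapsto\int c\,\mathrm d\gamma$ is l.s.c.\ because $c$ is l.s.c.\ and bounded below (being l.s.c.\ on a compact set). The privacy part equals $\sum_k D_f(\mu_k,p_0^k\mu)$ with $\mu_k:=\gamma(\cdot\times\{y_k\})$ and $\mu=\pi_{1\#}\gamma$; since $\mathcal Y$ is discrete each $\{y_k\}$ is clopen, so $\gamma_n\rightharpoonup\gamma$ entails $\mu_k^n\rightharpoonup\mu_k$ (hence $p_0^k\mu^n\rightharpoonup p_0^k\mu$), and the classical joint weak lower semicontinuity of $f$-divergences yields $\liminf_n\sum_k D_f(\mu_k^n,p_0^k\mu^n)\ge\sum_k D_f(\mu_k,p_0^k\mu)$. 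A minimizer $\gamma^*=(\mu^*,(p^{*k})_k)$ thus exists; applying the argument of the previous paragraph to it gives an exactly optimal $\tilde\gamma$ whose first marginal is supported on at most $K+2$ points.

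I expect the crux to be the $\varepsilon=0$ case, and within it the weak lower semicontinuity of the privacy functional, which is nonlinear in $\gamma$ through the posteriors $p_x$: the decisive move is to rewrite it as a finite sum of $f$-divergences $D_f(\mu_k,p_0^k\mu)$, for which joint lower semicontinuity under weak convergence is standard. A minor, routine point is checking that the objective density $g$ is genuinely $\mu$-integrable for near-optimal strategies --- needed to invoke the moment-problem representation --- which follows from $c$ bounded below together with the pointwise nonnegativity of the privacy density obtained via Jensen's inequality; this also guarantees the infimum is $>-\infty$, so the statement is not vacuous.
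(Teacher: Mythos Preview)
Your handling of the $\varepsilon>0$ case is the same as the paper's: fix an $\varepsilon$-optimal pair $(\mu,(p^k)_k)$, bundle the objective density $g$ with the $K$ constraint densities $p^1,\dots,p^K$, observe that the vector of integrals lies in the convex hull of the curve $x\mapsto(g(x),p^1(x),\dots,p^K(x))\subset\mathbb R^{K+1}$, and apply Carath\'eodory to get a feasible discrete $\tilde\mu$ on at most $K+2$ points with the same objective. The paper states the convex-hull membership without further comment; your added integrability check via Jensen is a harmless refinement.

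For $\varepsilon=0$ you and the paper take genuinely different routes. You run the direct method on the original infinite-dimensional problem: weak-$*$ compactness of $\{\gamma:\pi_{2\#}\gamma=p_0\}$ together with weak lower semicontinuity of the objective (the nontrivial piece being your rewriting of the privacy cost as $\sum_k D_f(\mu_k,p_0^k\mu)$ and the classical joint weak l.s.c.\ of $f$-divergences) gives a minimizer $\gamma^*$, to which you then apply the Carath\'eodory reduction. The paper reverses the order: the $\varepsilon>0$ case already shows that the infimum of \eqref{eq:staticmin1} equals the infimum of the \emph{finite-dimensional} problem over $(\gamma,x)\in\mathbb R_+^{(K+2)\times K}\times\mathcal X^{K+2}$, so it suffices to prove that this finite-dimensional problem attains its infimum. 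The paper does so via an elementary lemma showing that the perspective functions $h_k(\gamma_i)=(\sum_m\gamma_{i,m})f\big(\gamma_{i,k}/(p_0^k\sum_m\gamma_{i,m})\big)$ are l.s.c.\ on $\mathbb R_+^K$, then concludes by compactness of the feasible set and l.s.c.\ of $c(\cdot,y_k)$. Your route is conceptually cleaner and would carry over verbatim to non-discrete $\mathcal Y$; the paper's route is more self-contained (no appeal to weak l.s.c.\ of $f$-divergences on measure spaces) and leverages the discrete prior to collapse everything to a finite-dimensional compactness argument.
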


The proof is delayed to Appendix~\ref{app:proofs}. If the support of  $\gamma$ is included in $\lbrace (x_i, y_k) \ | \ 1 \leq i \leq K+2,\ 1 \leq k \leq K \rbrace$, we will denote it as a matrix $\gamma_{i,k} \coloneqq \gamma(\lbrace (x_i,y_k) \rbrace)$.
\begin{coro}
\label{coro:eqproblem}
In the case of a discrete prior, \eqref{eq:staticmin1} is equivalent to:
\begin{small}
\begin{gather*}
\inf\limits_{(\gamma, x) \in \mathbb{R}_+^{(K+2) \! \times \! K} \times \mathcal{X}^{K+2}}  \sum_{i,k} \gamma_{i,k} \ c(x_i, y_k) + \lambda \sum_{i,k} \gamma_{i,k} D(p_{x_i}, p_0)  \\
\text{such that }  \forall k \leq K, \ \sum_{i} \gamma_{i,k} = p_0^k. 
\end{gather*}\end{small}
\end{coro}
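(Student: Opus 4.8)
The plan is to prove the equivalence by exhibiting a feasibility- and value-preserving correspondence between the \emph{finitely supported} feasible measures of \eqref{eq:staticmin1} and the feasible points of the finite-dimensional program, and then to use Theorem~\ref{thm:caratheodory1} to reduce \eqref{eq:staticmin1} to its finitely supported instances. Concretely, I would show the two infima coincide, and that both are attained and correspond to one another under the extra compactness / lower semi-continuity hypothesis.

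First I would set up the correspondence in one direction. Given a feasible point $(\gamma, x)$ of the finite program, form $\bar\gamma \coloneqq \sum_{i,k} \gamma_{i,k}\, \delta_{(x_i, y_k)} \in \mathcal{P}(\mathcal{X} \times \mathcal{Y})$. The constraint $\sum_i \gamma_{i,k} = p_0^k$ is exactly $\pi_{2\#}\bar\gamma = p_0$, so $\bar\gamma$ is feasible for \eqref{eq:staticmin1}. On the atoms $x_i$ of $\mu = \pi_{1\#}\bar\gamma$ (at most $K+2$ of them), Bayes' rule gives $p_{x_i}(y_k) = \gamma_{i,k}/\sum_l \gamma_{i,l}$, so evaluating the integrand $c(x,y) + \lambda D(p_x, p_0)$ against $\bar\gamma$ yields exactly $\sum_{i,k}\gamma_{i,k}\big[c(x_i,y_k) + \lambda D(p_{x_i}, p_0)\big]$, i.e. the finite objective. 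Two bookkeeping points must be handled here: rows with $\sum_k \gamma_{i,k} = 0$ contribute $0$, with the convention $0 \cdot D = 0$; and if several $x_i$ coincide, the corresponding atoms automatically merge in $\bar\gamma$ and the relevant posterior is the merged one, which by convexity of $P \mapsto D(P, p_0)$ (the same convexity underlying Theorem~\ref{thm:convexprob}) can only \emph{decrease} the objective. In all cases the value of \eqref{eq:staticmin1} is at most the finite objective, hence $\inf \eqref{eq:staticmin1} \le \inf (\text{finite program})$.

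For the reverse inequality I would invoke Theorem~\ref{thm:caratheodory1}: for every $\varepsilon > 0$ there is a feasible $\gamma$ for \eqref{eq:staticmin1} with $\mu = \pi_{1\#}\gamma$ supported on $m \le K+2$ points $x_1, \dots, x_m$ and objective at most $\inf \eqref{eq:staticmin1} + \varepsilon$. Set $\gamma_{i,k} = \gamma(\{(x_i,y_k)\})$ for $i \le m$ and zero-pad the remaining rows ($x_i := x_1$, $\gamma_{i,k} := 0$ for $m < i \le K+2$); this is feasible for the finite program, the posteriors at $x_1, \dots, x_m$ are unchanged, and the padded rows contribute nothing, so its finite objective equals the objective of $\gamma$, hence is at most $\inf \eqref{eq:staticmin1} + \varepsilon$. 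Letting $\varepsilon \to 0$ gives $\inf(\text{finite program}) \le \inf \eqref{eq:staticmin1}$, so the two infima are equal. Under the compactness of $\mathcal{X}$ and lower semi-continuity of each $c(\cdot, y_k)$, the $\varepsilon = 0$ case of Theorem~\ref{thm:caratheodory1} produces an \emph{optimal} finitely supported $\gamma$, whose image under the correspondence achieves the common value; conversely any minimizer of the finite program maps to a minimizer of \eqref{eq:staticmin1}. (One can also check directly that the finite program attains its minimum: its feasible set is compact — each column of $\gamma$ lies in a scaled simplex and $\mathcal{X}^{K+2}$ is compact — and its objective is lower semi-continuous, since $\sum_{i,k}\gamma_{i,k} D(p_{x_i}, p_0) = \sum_{i,k} m_i\, p_0^k\, f\!\big(\gamma_{i,k}/(m_i p_0^k)\big)$ with $m_i = \sum_l \gamma_{i,l}$ is a sum of perspective functions of convex maps, hence jointly convex and lsc on $\mathbb{R}_+^{(K+2)\times K}$.)

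I do not expect a genuine obstacle: the mathematical content is entirely in Theorem~\ref{thm:caratheodory1}, and what remains is bookkeeping. The points that need care are the padding to exactly $K+2$ atoms together with the $0 \cdot D = 0$ convention, the treatment of coincident $x_i$ (where joint convexity of the $f$-divergence keeps the correspondence value-non-increasing), and verifying that the posterior obtained from Bayes' rule on the atoms of $\mu$ is precisely the quantity $p_{x_i}$ appearing in the finite objective.
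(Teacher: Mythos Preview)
Your proposal is correct and follows essentially the same approach as the paper. Both arguments reduce to Theorem~\ref{thm:caratheodory1} plus the observation that coincident rows $x_i = x_j$ cannot help: you phrase this as ``merging rows can only decrease the objective by convexity of $P \mapsto D(P,p_0)$'', while the paper phrases it as subadditivity of the perspective functions $h_k$ (convex and positively homogeneous, hence subadditive); these are the same inequality. Your write-up is a bit more explicit about the padding to exactly $K+2$ rows and the $0\cdot D = 0$ convention, which the paper leaves implicit.
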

\vspace{-0.5em}
%Theorem~\ref{thm:caratheodory1} claims that \eqref{eq:staticmin1} is equivalent to this problem if we also impose no redundancy, \ie $x_i \neq x_j$ for $i \neq j$. The proof of Lemma~\ref{lemma:eqproblem}, which is delayed to the Appendix~\ref{app:proofs}, shows that for any redundant solution $(\gamma, x)$, there is a non-redundant version with a lower value, using the subadditivity in $\gamma$.
Although it seems easier to consider the dimensionally finite problem given by Corollary~\ref{coro:eqproblem}, it is not jointly convex in $(\gamma, x)$.
No general algorithms exist to efficiently minimize non-convex problems. We refer the reader to \citep{horst2000} for an introduction to non-convex optimization.

The remaining of the paper reformulates the problem to better understand its structure, which then leads to better local minima. Computing global minima of Problem~\eqref{eq:staticmin1} is yet left open for future work.

\vspace{-0.5em}
\section{Sinkhorn Loss minimization}
\label{sec:transport}
\vspace{-0.5em}
Formally, \eqref{eq:staticmin1} is expressed as Optimal Transport Minimization for the utility cost $c$ with a regularization given by the privacy cost. In this section, we focus on the case where this privacy cost is the Kullback-Leibler divergence. In this case, the problem becomes a Sinkhorn loss minimization, which presents computationally tractable schemes \citep{OTbook}. If the privacy cost is  the KL divergence between the posterior and the prior, \ie $f(t) = t \log(t)$, then the regularization term corresponds to the mutual information $I(X;Y)$. As explained above, this is the classical cost of information in economics.
%The mutual information is the natural choice for a $f$-divergence privacy cost. 

Recall that the Sinkhorn loss for given distributions $(\mu, \nu) \in \mathcal{P}(\mathcal{X}) \times \mathcal{P}(\mathcal{Y})$ is defined by 
\begin{small}\begin{equation} \label{eq:sinkhorn} 
\begin{aligned}
\mathrm{OT}_{c, \lambda}(\mu, \nu) \coloneqq & \min\limits_{\gamma \in \Pi(\mu, \nu)} \int c(x,y) \mathrm{d}\gamma(x,y)  \\& + \lambda \int \log\left( \frac{\mathrm{d}\gamma(x,y)}{\mathrm{d}\mu(x) \mathrm{d}\nu(y)}\right) \mathrm{d}\gamma(x,y), 
\end{aligned} \end{equation}\end{small}%\begin{equation} \label{eq:sinkhorn} \begin{split}
%\mathrm{OT}_{c, \lambda}(\mu, \nu) := \min\limits_{\gamma \in \Pi(\mu, \nu)} \int c(x,y) \mathrm{d}\gamma(x,y) + \lambda \int \log\left( \frac{\mathrm{d}\gamma(x,y)}{\mathrm{d}\mu(x) \mathrm{d}\nu(y)}\right) \mathrm{d}\gamma(x,y), \\
%\text{with } \Pi(\mu, \nu) = \lbrace \gamma \in \Delta(\mathcal{X}\times\mathcal{Y}) \ | \ \pi_1 \# \gamma = \mu \text{ and } \pi_2\#\gamma = \nu \rbrace.
%\end{split} \end{equation}
where $\Pi(\mu, \nu) = \lbrace \gamma \in \mathcal{P}(\mathcal{X}\times\mathcal{Y}) \ | \ \pi_{1\#}\gamma=\mu  \text{ and } \pi_{2\#}\gamma = \nu \rbrace$. Problem~\eqref{eq:staticmin1} with $D=\mathrm{KL}$ can then be rewritten as the following Optimal Transport minimization:
\begin{small}\vspace{-0.5em}\begin{equation*} %\label{eq:OTproblem}
\inf\limits_{\mu\in\mathcal{P}(\mathcal{X})} \mathrm{OT}_{c, \lambda}(\mu, p_0).\vspace{-0.5em}
\end{equation*}\end{small}%
%\vspace{-0.5em}%
Indeed, observe that $\frac{\mathrm{d}\gamma(x,y)}{\mathrm{d}\mu(x)}$ is the posterior probability $\mathrm{d}p_x(y)$, thanks to Bayes rule. The regularization term in equation~\eqref{eq:sinkhorn} then corresponds to $D(p_x, p_0)$ as $p_0 = \nu$ and $D=\mathrm{KL}$ here. The minimization problem given by equation~\eqref{eq:sinkhorn} is thus equivalent to equation~\eqref{eq:staticmin1} with the additional constraint $\pi_{1 \#} \gamma = \mu$. Minimizing without this constraint is thus equivalent to minimizing the Sinkhorn loss over all action distributions $\mu$.

\medskip

While the regularization term is usually only added to speed up the computations, it here directly appears in the cost of the original problem since it corresponds to the privacy cost! An approximation of $\mathrm{OT}_{c, \lambda}(\mu, \nu)$ can then be quickly computed for discrete distributions using Sinkhorn algorithm \citep{Cuturi2013}.
%However, if the scaling parameter $\lambda$ is of the order $\Theta(1)$  (instead of being close to $0$) then the Sinkhorn algorithm convergence rate and sample complexity are both comparable with those of Maximal Mean Discrepancy \cite{Gretton08, Cuturi2013, genevay2018b}.\todo{Check if correct}

Notice that the definition of Sinkhorn loss sometimes differs in the literature and instead considers $\int \log\left( \mathrm{d}\gamma(x,y)\right) \mathrm{d}\gamma(x,y)$ for the regularization term. When $\mu$ and $\nu$ are both fixed, the optimal transport plan $\gamma$ remains the same. As $\mu$ is varying here, these notions yet become different. For this alternative definition, a minimizing distribution $\mu$ would actually be easy to compute. It is much more complex in our problem because of the presence of $\mu$ in the denominator of the logarithmic term.

In the case of discrete support, we can then look for a distribution $\mu= \sum_{j=1}^{K+2} \alpha_j \delta_{x_j}$. In case of continuous distributions, they could still be approximated using sampled discrete distributions as previously done for generative models \citep{genevay2018b, genevay18a}.

Besides being a new interpretation of Sinkhorn loss, this reformulation mainly allows to better understand the problem structure and reduce the support size of the distribution in the minimization problem.

%
%If the $f$-divergence is not the Kullback-Leibler, then , an approximation of $\mathrm{OT}_{c, \lambda}(\mu, \nu)$ can be computed using gradient descent, thanks to the convexity of the problem. It would however not present the same computational advantages as with the entropic regularization. 
%
%In that case, directly using gradient descent on Problem~\ref{eq:staticfinite2} might even be the best option, but the parameters ($(\gamma, x)$ instead of $(\mu, x)$) would lie in a larger space and could lead to spurious local minima, as empirically confirmed in Section~\ref{sec:expe}. \todo{Pas compris ce paragraphe. On l'enleve ?}
\vspace{-0.5em}
\subsection{Minimization algorithm}
\vspace{-0.5em}
We now consider the following minimization problem over the tuple $(\alpha, x)$:
\begin{small}\vspace{-1em}
\begin{equation}
\label{eq:sinkhmini}
\inf\limits_{(\alpha, x) \in \Delta_{K+2} \! \times \! \mathcal{X}^{K+2}} \mathrm{OT}_{c, \lambda}\Big( \sum_{i=1}^{K+2} \alpha_i \delta_{x_i}, p_0 \Big).\vspace{-0.5em}
\end{equation}\end{small}%
The main difficulties come from the computation of the objective function and its gradient to use classical gradient based methods.
\vspace{-0.5em}
\paragraph{Sinkhorn algorithm.}
It was recently suggested to use the Sinkhorn algorithm, which has a linear convergence rate, to compute $\mathrm{OT}_{c, \lambda} (\mu, \nu)$ for distributions $\mu = \sum_{i=1}^n \alpha_i \delta_{x_i}$ and $\nu = \sum_{j=1}^m \beta_j \delta_{y_j}$ \citep{Knight2008, Cuturi2013}.
Let $K$ be the exponential cost matrix defined by $K_{i,j} = e^{-\frac{c(x_i, y_j)}{\lambda}}$. In the discrete case, the unique matrix $\gamma$ solution of the Problem~\eqref{eq:sinkhorn} has the form $\mathrm{diag}(u) K \mathrm{diag}(v)$. The Sinkhorn algorithm then updates $(u,v) \gets (\alpha / Kv , \beta / K^\top u)$ (with component-wise division) for $n$ iterations or until convergence.
\vspace{-0.5em}
\paragraph{Gradient computation.}
Computing $\nabla\mathrm{OT}_{c, \lambda}$ is a known difficult task \citep{Feydy2018, Luise2018, genevay18a}. A simple solution consists in using automatic differentiation, \ie computing the gradient using the chain rule over the simple successive operations computed during the Sinkhorn algorithm.

The gradient can also be computed from the dual solution of Problem~\eqref{eq:sinkhorn}. This method is faster as it does not need to store all the Sinkhorn iterations in memory and \textit{backpropagate} through them afterwards. Convergence of Sinkhorn algorithm has yet to be guaranteed to provide an accurate approximation of the gradient \citep[see][for an extended discussion]{OTbook}. Automatic differentiation is used in the experiments because of this last reason.

%\begin{algorithm}[H]
%\caption{\label{algo:optim} Optimization scheme}
%\textbf{Input:} discrete prior $p_0$, cost function $c$ \\
%\textbf{Output:} strategy $\mu_\theta$
%\begin{algorithmic}
%\STATE Initialize $\theta$
%\STATE \textbf{Until convergence }\algorithmicdo
%\STATE \hspace{\algorithmicindent} $\mathrm{cost} \gets \mathrm{Sinkhorn}_{n}(\mu_{\theta}, p_0, c, \lambda)$ \\
%\hfill \COMMENT{$n$ iterations of Sinkhorn algorithm}
%\STATE \hspace{\algorithmicindent} $\nabla_{\theta} \gets \mathrm{AutoDiff}_{\theta}(\mathrm{loss})$
%\STATE \hspace{\algorithmicindent} $\theta \gets \mathrm{Update}(\theta, \nabla_{\theta})$ \hfill \COMMENT{gradient descent}
%\STATE \algorithmicend
%\RETURN $\mu_\theta$
%\end{algorithmic}
%\end{algorithm}

\vspace{-1em}
\section{Experiments and particular cases}
\label{sec:expe}
\vspace{-1em}
In this section, the case of linear utility cost is first considered and shown to have relations with DC programming, which allows efficient algorithms. The performances of different optimization schemes are then compared on a simple example. Simulations based on the Sinkhorn scheme are then run for the real problem of online repeated auctions. The code is available at \url{github.com/eboursier/regularized_private_learning}.
\vspace{-0.5em}
\subsection{Linear utility cost}
\vspace{-0.5em}
Section~\ref{sec:convex} described a general optimization scheme for Problem~\eqref{eq:staticmin1} with a discrete type prior. It used a dimensionally finite, non-convex problem. An objective is then to find a local minimum.
Local minima can be found using classical techniques of gradient descent \citep{Wright2015}. However in some particular cases, better schemes are possible as claimed in Section~\ref{sec:transport} for the particular case of entropic regularization. In the case of a linear utility for any privacy cost, it is related to DC programming \citep{horst2000}. A standard DC program is of the form
$ %\begin{equation*}
\min_{x \in \mathcal{X}} f(x) - g(x),
$ %\end{equation*}
where both $f$ and $g$ are convex functions. Specific optimization schemes are then possible \citep{tao1997, horst1999, horst2000}. In the case of linear utility costs over a hyperrectangle, \eqref{eq:staticmin1} can be reformulated as a DC program stated in Theorem~\ref{thm:linearcost}. Its proof is delayed to Appendix~\ref{app:proofs}.

\begin{thm}
\label{thm:linearcost}
If $\mathcal{X} = \prod\limits_{l=1}^d [a_l,\ b_l]$ and $c(x,y) = x^\top y$, then \eqref{eq:staticmin1} is equivalent to the following DC program:
\begin{small}
\begin{gather*}
\min_{\gamma \in \mathbb{R}_+^{(K+2) \times K}}  \lambda \sum\limits_{i,k} p_0^k h_k(\gamma_i) - \sum_{i=1}^{K+2} \left\| \sum_{k=1}^K \gamma_{i,k} \phi(y_k) \right\|_1 \! , \\ \text{such that }  \forall k \leq K, \ \sum_{i=1}^{K+2} \gamma_{i,k} = p_0^k ,
\end{gather*}\end{small}%
with $\phi(y)^l \coloneqq (b_l - a_l)y^l/2$ \\ and $h_{k}(\gamma_i) \coloneqq \big(\sum_{m=1}^K \gamma_{i,m}\big) f \big(\frac{\gamma_{i,k}}{p_0^k\sum_{m=1}^K \gamma_{i,m}}\big)$.
%\begin{equation*}
%\min\limits_{\substack{\gamma \in \mathbb{R}_+^{(K+2) \times K} \\ \forall k, \sum\limits_{i=1}^{K+2} \gamma_{i,k} = p_0^k}} \lambda \sum\limits_{i,k} p_0^k h_k(\gamma_i) - \sum\limits_{i} \| \sum\limits_k \gamma_{i,k} \phi(y_k) \|_1, \hspace{1cm} \text{ where } \phi(y)^l := (b_l - a_l) \frac{y^l}{2}.
%\end{equation*}
\end{thm}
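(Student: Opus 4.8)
The plan is to start from the equivalent finite-dimensional formulation given by Corollary~\ref{coro:eqproblem}, specialized to the linear cost $c(x,y)=x^\top y$, and then perform the inner minimization over $x \in \mathcal{X}^{K+2}$ in closed form. Writing the objective of Corollary~\ref{coro:eqproblem} as $\sum_{i,k}\gamma_{i,k}\, x_i^\top y_k + \lambda\sum_{i,k}\gamma_{i,k} D(p_{x_i},p_0)$, I would first observe that for a discrete prior, $D(p_{x_i},p_0) = \mathrm{KL}$-type $f$-divergence depends on the posterior $p_{x_i}(k) = \gamma_{i,k}/\sum_m \gamma_{i,m}$, hence $\sum_k \gamma_{i,k} D(p_{x_i},p_0) = \sum_k p_0^k h_k(\gamma_i)$ with $h_k$ as defined in the statement — this matches the $\lambda$-term of the claimed DC program and crucially does \emph{not} involve $x$. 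So the only $x$-dependence is in the bilinear term $\sum_i x_i^\top\big(\sum_k \gamma_{i,k} y_k\big)$, and it separates across the points $x_i$.

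Next I would carry out the inner minimization $\min_{x_i \in \prod_l [a_l,b_l]} x_i^\top z_i$ where $z_i \coloneqq \sum_k \gamma_{i,k} y_k$. Since the box constraint is coordinatewise and the objective is linear, the minimum is attained at a vertex: coordinate $l$ takes value $a_l$ if $z_i^l > 0$ and $b_l$ if $z_i^l < 0$. The resulting optimal value is $\sum_l \min(a_l z_i^l, b_l z_i^l) = \sum_l \big(\tfrac{a_l+b_l}{2} z_i^l - \tfrac{b_l - a_l}{2}|z_i^l|\big)$. The first piece $\sum_l \tfrac{a_l+b_l}{2} z_i^l$ summed over $i$ equals $\sum_i \tfrac{a+b}{2}^\top\big(\sum_k \gamma_{i,k} y_k\big) = \tfrac{a+b}{2}^\top\big(\sum_k p_0^k y_k\big)$ by the marginal constraint $\sum_i \gamma_{i,k}=p_0^k$, hence is a constant independent of $\gamma$ and can be dropped. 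The remaining piece is $-\sum_i \sum_l \tfrac{b_l-a_l}{2}\big|\sum_k \gamma_{i,k} y_k^l\big| = -\sum_i \big\|\sum_k \gamma_{i,k}\,\phi(y_k)\big\|_1$ with $\phi(y)^l = (b_l-a_l)y^l/2$, exactly the second term of the stated program.

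Finally I would check the DC structure: the map $\gamma \mapsto \lambda\sum_{i,k} p_0^k h_k(\gamma_i)$ is convex because $h_k(\gamma_i) = \big(\sum_m \gamma_{i,m}\big) f\big(\gamma_{i,k}/(p_0^k\sum_m\gamma_{i,m})\big)$ is the perspective of the convex function $f$ (up to the positive scaling $1/p_0^k$), hence jointly convex in $\gamma_i$; this is essentially the same convexity fact used in the proof of Theorem~\ref{thm:convexprob}. And $\gamma \mapsto \sum_i \big\|\sum_k \gamma_{i,k}\phi(y_k)\big\|_1$ is convex as a sum of norms of linear maps of $\gamma$, so the objective is a difference of two convex functions over the polytope cut out by the linear marginal constraints and $\gamma \geq 0$, which is a standard DC program. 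The one point deserving care — and the main obstacle — is justifying that replacing the joint problem $\inf_{(\gamma,x)}$ by $\inf_\gamma$ of the pointwise-minimized objective is legitimate: one must note the inner minimum over each $x_i$ is attained (the box is compact, the integrand continuous), that the value function so obtained is exactly the displayed expression, and that the constraint set on $\gamma$ is unchanged by this partial minimization. I would also remark that the sign/absolute-value bookkeeping (which vertex of the box is optimal, and the algebraic identity $\min(a_l t, b_l t) = \tfrac{a_l+b_l}{2}t - \tfrac{b_l-a_l}{2}|t|$ valid since $a_l \le b_l$) is the only computational step, and it is routine.
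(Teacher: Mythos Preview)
Your proposal is correct and follows essentially the same route as the paper: start from Corollary~\ref{coro:eqproblem}, note the privacy term depends only on $\gamma$, minimize the bilinear utility term over each $x_i$ in the box, and use the marginal constraint to drop the resulting affine-in-$\gamma$ piece. The only cosmetic difference is that the paper first rescales $\mathcal{X}$ to $[-1,1]^d$ via $\psi(x)^l=(2x^l-a_l-b_l)/(b_l-a_l)$ and writes $c(x,y)=\psi(x)^\top\phi(y)+\eta(y)$ before minimizing, whereas you compute $\min_{x_i\in\prod_l[a_l,b_l]} x_i^\top z_i$ directly via the identity $\min(a_l t,b_l t)=\tfrac{a_l+b_l}{2}t-\tfrac{b_l-a_l}{2}|t|$; these are the same computation in different clothing, and your explicit check of the DC structure (perspective convexity for $h_k$, norm-of-linear for the second term) is a welcome addition that the paper leaves implicit.
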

More generally, if the cost $c$ is concave and the action space $\mathcal{X}$ is a polytope,  optimal actions are  located on the vertices of $\mathcal{X}$. In that case, we can therefore replace $\mathcal{X}$ by the set of its vertices and the problem becomes a dimensionally finite convex problem as already claimed in Section~\ref{sec:model}. Unfortunately, for some polytopes such as hyperrectangles, the number of vertices grows exponentially with the dimension and the optimization scheme is no longer tractable in large dimensions.
\vspace{-0.5em}
\subsection{Comparing methods on the toy example}
\vspace{-0.5em}
\begin{figure*}[t!]
\begin{adjustwidth}{-0.7in}{-0.2in}
\centering
\resizebox{0.9\linewidth}{!}{\input{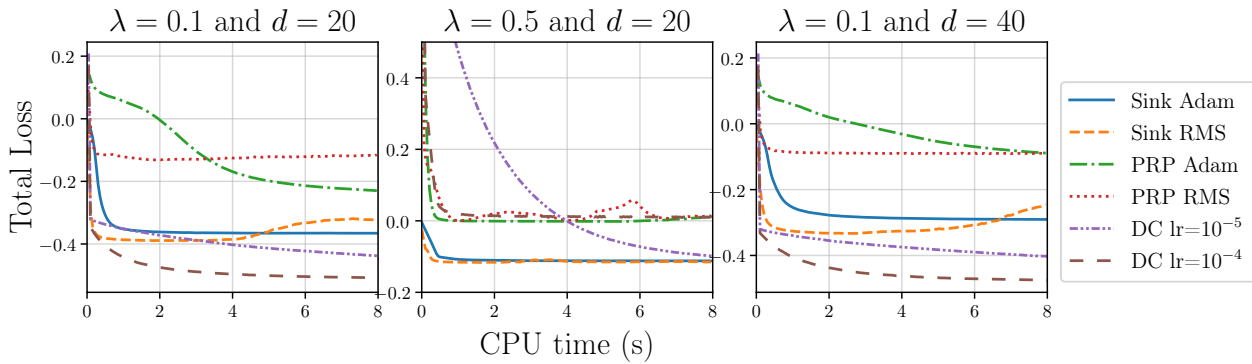}}
\caption{\label{fig:simu1}Comparison of optimization schemes.}
\end{adjustwidth}
\vspace{-0.5em}
\end{figure*}

%We first compare the performance of different algorithms on the toy example. 
We consider the linear utility loss $c(x,y) = x^\top y$ over the space $\mathcal{X} = [-1, 1]^d$ and the Kullback-Leibler divergence for privacy cost, so that both DC and Sinkhorn schemes are possible.
Different methods exist for DC programming and they compute either a local or a global minimum. We here choose the DCA algorithm \citep{tao1997} as it computes a local minimum and is thus comparable to the other considered schemes. Figure~\ref{fig:simu1} compares, for different problem parameters, the convergence rates of usual non-convex optimization methods (ADAM and RMS), as well as DCA. The former methods are used on different minimizations given by Corrolary~\ref{coro:eqproblem} and Equation~\eqref{eq:sinkhmini} (resp.~PRP and Sink).
%
%\begin{figure}[h]
%\begin{adjustwidth}{-0.7in}{-0.2in}
%\begin{subfigure}{.35\linewidth}
%\centering
%\resizebox{\linewidth}{!}{\input{lamb01_K100_dim20.pgf}}
%\caption{$\lambda=0.1$, $K=100$ and $d=20$.}
%\label{fig:sub11}
%\end{subfigure}%
%\begin{subfigure}{.35\linewidth}
%\centering
%\resizebox{\linewidth}{!}{\input{lamb05_K100_dim20.pgf}}
%\caption{$\lambda=0.5$, $K=100$ and $d=20$.}
%\label{fig:sub12}
%\end{subfigure}
%\begin{subfigure}{.35\linewidth}
%\centering
%\resizebox{\linewidth}{!}{\input{lamb01_K100_dim40.pgf}}
%\caption{$\lambda=0.1$, $K=100$ and $d=40$.}
%\label{fig:sub13}
%\end{subfigure}
%\caption{\label{fig:simu1}Comparison of different optimization schemes on toy examples with tuned learning rates (\texttt{lr}).}
%\end{adjustwidth}
%\end{figure}

We optimized using projected gradient descent for well tuned learning rates. The prior $p_0^k$ is chosen proportional to $e^{Z_k}$ for any $k \in [K]$, where $Z_k$ is drawn uniformly at random in $[0, 1]$. Each $y_i^k$ is taken uniformly at random in $[-1, 1]$ and is rescaled so that $\| y_i \|_1 = 1$. The values are averaged over $200$ runs.
% compares the computation times for different values of the regularization constant $\lambda$, the action space dimension $d$ with the size of the prior $K=100$. 

The DC method finds better local minima than the other ones. This  was already observed in practice \citep{tao1997} and confirms that it is more adapted to the structure of the problem, despite being only applicable in very specific cases such as linear cost on hyperrectangles. Also, the PRP method converges to worse spurious local minima as it optimizes in higher dimensional spaces than the Sinkhorn method. We also observed in our experiments that PRP method is more sensitive to problem parameters than Sinkhorn method.

The Sinkhorn method seems to perform better for larger values of $\lambda$. Indeed, given the actions, the Sinkhorn method computes the best joint distribution for each iteration and thus performs well when the privacy cost is predominant, while DCA computes the best actions given a joint distribution and thus performs well when the utility cost is predominant. %Finally, the Sinkhorn method seems to perform better than the descent method, while the DC method performs even better as it takes even more into account the problem structure for this particular case. 
It is thus crucial to choose the method which is most adapted to the problem structure as it can lead to significant improvement in the solution.
%
%\begin{figure}[h]
%\centering
%\resizebox{0.8\linewidth}{!}{\input{tradeoff_K10_v2.pgf}}
%\caption{Evolution of privacy-utility with $\lambda$.}
%\label{fig:sub21}
%\end{figure}
%
\begin{figure*}[t!]
\begin{subfigure}{.45\linewidth}
\centering
\resizebox{\linewidth}{!}{\input{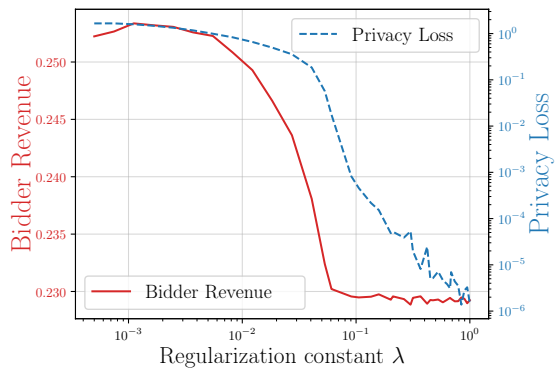}}
\caption{Evolution of privacy-utility with $\lambda$.}
\label{fig:sub21}
\end{subfigure}%
\hspace{1cm}
\begin{subfigure}{.45\linewidth}
\centering
\resizebox{\linewidth}{!}{\input{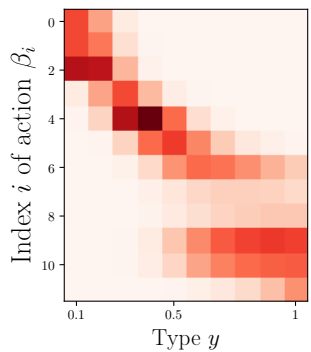}}
\caption{Joint distribution map for $\lambda = 0.01$. The intensity of a point $(i,j)$ corresponds to the value of $\gamma(\beta_i, y_j)$.}
\label{fig:sub22}
\end{subfigure}
\caption{\label{fig:simu2}Privacy-utility trade-off in online repeated auctions.}
\end{figure*}
%
%\begin{figure}[h]
%\centering
%\resizebox{0.8\linewidth}{!}{\input{gamma_heatmap_lamb0001.pgf}}
%\caption{Joint distribution heat-map, with $\lambda = 0.01$.}
%\label{fig:sub22}
%\end{figure}
%
\begin{figure*}[t!]
\resizebox{0.9\linewidth}{!}{\input{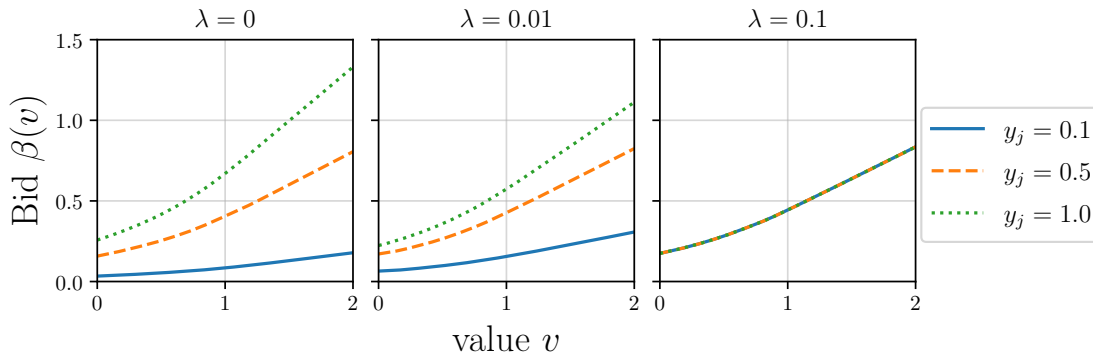}}
\caption{\label{fig:simu3}Evolution of the bidding strategy with the type and the regularization constant.}
\end{figure*}
\vspace{-0.5em}
\subsection{Utility-privacy in repeated auctions}
\vspace{-0.5em}
For repeated second price auctions following a precise scheme \citep{paes2016}, there exist numerical methods to implement an optimal strategy for the bidder \citep{nedelec2019}. However, if the auctioneer knows that the bidder plays this strategy, he can still infer the bidder's type and adapt to it. We thus require to add a privacy cost to avoid this kind of behavior from the auctioneer.

For simplicity, bidder's valuations are assumed to be exponential distributions, so that the private type $y$ corresponds to the only parameter of this distribution, i.e., its expectation: $y = \mathbb{E}_{v \sim \mu_y}[v]$. Moreover, we assume that the prior $p_0$ over $y$ is the discretized uniform distribution on $[0, 1]$ with a support of size $K=10$; let $\{y_j\}_{j=1,\ldots,K}$ be the support of $p_0$.

In repeated auctions, values  $v$ are repeatedly sampled from the distribution $\mu_{y_j}$ and a bidder policy  is a mapping $\beta(\cdot)$ from values to bids, i.e., she bids $\beta(v)$ if her value is $v$. So a type $y_j$ and a policy $\beta(\cdot)$ generate the bid distribution $\beta_{\#} \mu_{y_j}$, which corresponds to an action in $\mathcal{X}$ in our setting. As a consequence, the set of actions of the agent are the probability distributions over $\mathbb{R}_+$ and an action $\rho_i$  is naturally generated from the valuation distribution via the optimal monotone transport map denoted by $\beta_j^i$,  i.e., $\rho_i =  \beta^i_{j \#} \mu_{y_j}$ \citep{santambrogio2015}. In the particular case of exponential distributions, this implies that $\beta_i^j(v) = \beta_i(v/y_j)$ where $\beta_i$ is the unique monotone transport map from $\mathrm{Exp}(1)$ to $\rho_i$. 
The revenue of the bidder is then deduced for exponential distributions \citep{nedelec2019} as 
\begin{small}
\begin{align*}&r(\beta_i, y_j) = 1-c(\beta_i, y_j) \\& = \mathbb{E}_{v \sim \mathrm{Exp}(1)} \big[ \big( y_j v - \beta_i(v) + \beta'_i(v)\big) G\big(\beta_i(v)\big) \mathds{1}_{\beta_i(v) - \beta'_i(v)\geq 0}\big],
\end{align*} 
\end{small}%
where $G$ is the \textit{c.d.f.} of the maximum bid of the other bidders. We here consider a single truthful opponent with a uniform value distribution on $[0, 1]$, so that $G(x) = \min(x,1)$. This utility is averaged over $10^3$ values drawn from the corresponding distribution at each training step and $10^6$ values for the final evaluation.

Considering the KL for privacy cost, we compute a strategy $(\gamma, \beta)$ using the Sinkhorn scheme described in Section~\ref{sec:transport}. Every action $\beta_i$ is parametrized as a single layer neural network of $100$ ReLUs. Figure~\ref{fig:sub21} represents both utility and privacy as a function of the regularization factor $\lambda$.

Naturally, both the bidder revenue and the privacy loss decrease with $\lambda$, going from revealing strategies for $\lambda \simeq 10^{-3}$ to non-revealing strategies for larger $\lambda$. They significantly drop at a critical point near $0.05$, which can be seen as the cost of information here. There is a $8$\% revenue difference\footnote{Which is significant for large firms such as those presented in Figure~\ref{fig:advertisement} besides the revenue difference brought by considering non truthful strategies \citep{nedelec2019}.} between the non revealing strategy and the partially revealing strategy shown in Figure~\ref{fig:sub22}. The latter randomizes the type over the neighboring types and reveals more information when the revenue is sensible to the action, \ie for low types $y_j$ here. This strategy thus takes advantage from the fact that the value of information is here heterogeneous among types, as desired in the design of our model.

Figure~\ref{fig:simu3} shows the most used action for different types and $\lambda$. In the revealing strategy ($\lambda=0$), the action significantly scales with the type. But as $\lambda$ grows, this rescaling shrinks so that the actions perform for several types, until having a single action in the non-revealing strategy. This shrinkage is also more important for large values of $y_j$. This confirms the observation made above: the player loses less by hiding her type for large values than for low values and she is thus more willing to hide her type when it is large.

Besides confirming expected results, this illustrates how the \NameOne is adapted to complex utility costs and action spaces, such as distributions or function spaces.

\vspace{-0.5em}
\section{Conclusion}
\vspace{-0.5em}
%In case of a continuous prior, Caratheodory's theorem is no more valid and the problem becomes a continuous Optimal Transport problem. This problem is way harder as computing the Sinkhorn divergence only is a difficult task \cite{genevay2016}. However, as soon as this distance and a gradient can be approximated, a parametrized locally optimal $\mu_{\theta}$ can be computed through gradient descent.
%
%\cite{Smith2009} showed the limitations of the mutual information as a privacy cost and introduced the conditional min-entropy which presents nice properties. It corresponds to the Reny (or $\alpha$) divergence with $\alpha \to \infty$. Thus, it seems interesting to extend our results for general $f$-divergences and not only limit them to the Kullback Leibler divergence. Also, it might of course been interesting to consider even more general privacy cost functions.
%
We formalized a new utility-privacy trade-off problem to compute strategies revealing private information only if it induces a significant increase in utility. For classical costs in economics, it benefits from recent advances of Optimal Transport. %, which make the problem efficiently solvable for complex action spaces or utility costs. 
It yet leads to a hard non-convex minimization problem and future work includes designing efficient algorithms computing global minima for this problem.

We believe that this work is a step towards the design of optimal utility vs. privacy trade-offs in economic mechanisms as well as for other applications. Its many connections with recent topics of interest motivate a better understanding of them as future work.

\addcontentsline{toc}{section}{References}
\bibliographystyle{abbrvnat}
\bibliography{private}

\clearpage
\appendix
\section{Proofs}
\label{app:proofs}

\begin{proof}[Proof of Theorem~\ref{thm:convexprob}]
The constraint set is obviously convex. The first integral is linear and thus convex in $\gamma$. It remains to show that the privacy loss is also convex in $\gamma$.
As $D$ is an $f$-divergence, the privacy cost is
\begin{equation*}
\begin{aligned}
c_{\text{priv}}(\gamma) & \coloneqq \int_{\mathcal{X} \times \mathcal{Y}} D\left( p_x, p_0\right)\mathrm{d}\gamma(x,y) \\ & = \int_{\mathcal{X}} \int_{\mathcal{Y}} f\Big(\frac{\mathrm{d}\gamma(x,y)}{\mathrm{d}\gamma_1(x) \mathrm{d}p_0(y)} \Big) \mathrm{d}p_0(y)\mathrm{d}\gamma_1(x),
\end{aligned}
\end{equation*}

where $\gamma_1 = \pi_{1 \#} \gamma$.
For $t \in (0,1)$ and two distributions $\gamma$ and $\mu$, we can define the convex combination $\nu = t \gamma + (1-t) \mu$. By linearity of the projection $\pi_1$, $\nu_1 = t \gamma_1 + (1-t) \mu_1$.
The convexity of $c_{\text{priv}}$ actually results from the convexity of the \textit{perspective} of $f$ defined by $g(x_1,x_2) = x_2 f(x_1/x_2)$ \citep{boyd2004}. It indeed implies
\begin{small}\begin{equation*} \begin{aligned}
t f\Big(\frac{\mathrm{d}\gamma(x,y)}{\mathrm{d}\gamma_1(x) \mathrm{d}p_0(y)}\Big)\mathrm{d}\gamma_1(x) +  (1-t)f\Big(\frac{\mathrm{d}\mu(x,y)}{\mathrm{d}\mu_1(x) \mathrm{d}p_0(y)} \Big) \mathrm{d}\mu_1(x) \\ \geq f\Big(\frac{\mathrm{d}\nu(x,y)}{\mathrm{d}\nu_1(x) \mathrm{d}p_0(y)} \Big) \mathrm{d}\nu_1(x). \end{aligned}
\end{equation*}\end{small}

The result then directly follows when summing over $\mathcal{X}\times \mathcal{Y}$.
\end{proof}
\medskip

\begin{proof}[Proof of Theorem~\ref{thm:caratheodory1}]
We first consider $\varepsilon > 0$. Let $(p^k)_k$ and $\mu$ be an $\varepsilon$-optimal solution of this problem. We define $g_0(x) = \sum_k \big[ p^k(x) c(x, y_k) + \lambda p_0^k(x) f\left( \frac{p^k(x)}{p_0^k}\right) \big]$ and $g_k = p^k$ for $1 \leq k \leq K$. Let $\alpha_j(\mu) = \int_\mathcal{X} g_j \mathrm{d}\mu$ for $0 \leq j \leq K$. The considered solution $\mu$ is included in a convex hull as follows:
\begin{equation*}
(\alpha_j(\mu))_{0 \leq j \leq K} \in \mathrm{Conv} \{ (g_j(x))_{0 \leq j \leq K} \ / \ x \in \mathcal{X} \}.
\end{equation*}

So by Caratheodory theorem, there exist $K+2$ points $x_i \in \mathcal{X}$ such that $(\alpha_j(\mu))_{0 \leq j \leq K} = \sum_{i=1}^{K+2} t_i (g_j(x_i))_{0 \leq j \leq K}$ where $(t_i)$ is a convex combination.
Let $\mu' = \sum_{i=1}^{K+2} t_i \delta_{x_i}$. We then have $\alpha_j(\mu') = \alpha_j(\mu)$ for all $j$, which means that $(\mu', (p^k)_k)$ is also an $\varepsilon$-optimal solution of the problem given at the beginning of Section~\ref{sec:finite} and the support of $\mu'$ is a subset of $\lbrace x_i \ | \ i \in [K+2] \rbrace$.

Let $h_{k}(\gamma_i) \coloneqq \left(\sum_{m=1}^K \gamma_{i,m}\right) f \left(\frac{\gamma_{i,k}}{p_0^k\sum_{m=1}^K \gamma_{i,m}}\right)$, with the conventions $f(0) = \lim\limits_{x\to 0} f(x) \in \mathbb{R}\cup\lbrace +\infty \rbrace$ and ${h_{k}(\gamma_i) = 0}$ if $\sum_{m=1}^K\gamma_{i,m}=0$.

The privacy cost is then the sum of the $h_{k}(\gamma_i)$ for all $k$ and $i$. The case $\varepsilon=0$ comes from the lower semi-continuity of the objective function, as claimed by Lemma~\ref{lemma:lsc}, whose proof is given below. 
\begin{lemma}
\label{lemma:lsc}
For any $k$ in $\lbrace 1 , \ldots, K \rbrace$, $h_{k}$ is lower semi-continuous.
\end{lemma}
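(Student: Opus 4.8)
The plan is to prove that $h_k(\gamma_i) = \bigl(\sum_{m=1}^K \gamma_{i,m}\bigr) f\bigl(\frac{\gamma_{i,k}}{p_0^k \sum_{m=1}^K \gamma_{i,m}}\bigr)$, extended by $0$ when $\sum_m \gamma_{i,m} = 0$, is lower semi-continuous on $\mathbb{R}_+^K$ (the relevant domain, since $\gamma_{i,\cdot} \geq 0$). The key observation is that $h_k$ is, up to the positive constant $1/p_0^k$ inside and the rescaling, essentially the \emph{perspective function} of $f$. Recall that for a convex function $f$ on $\mathbb{R}_+^*$, its perspective $g(s,u) = u\, f(s/u)$ for $u>0$, suitably closed at the boundary, is a convex function on $\mathbb{R}_+^2$; this is exactly the fact already invoked in the proof of Theorem~\ref{thm:convexprob} (citing \citep{boyd2004}). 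A proper closed convex function is in particular lower semi-continuous everywhere on its domain, including the boundary, so the only work is to identify $h_k$ with such a closed perspective and to check that the value prescribed at $\sum_m \gamma_{i,m}=0$ is consistent with the lower semi-continuous closure.

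Concretely, first I would fix $i$ and $k$ and write $u = \sum_{m=1}^K \gamma_{i,m}$ and $s = \gamma_{i,k}/p_0^k$, both continuous (indeed linear) functions of $\gamma_{i,\cdot} \in \mathbb{R}_+^K$, with $u=0 \iff \gamma_{i,m}=0$ for all $m$ (by nonnegativity) $\implies s=0$. Then $h_k(\gamma_i) = u\, f(s/u) = g(s,u)$ for $u>0$, where $g$ is the perspective of $f$. Since precomposition of a lower semi-continuous function with a continuous map is lower semi-continuous, it suffices to show $g$ is lower semi-continuous on $\mathbb{R}_+\times\mathbb{R}_+$ with the prescribed boundary value $g(s,0)=0$. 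For $u>0$ this is clear by continuity of $f$ on $\mathbb{R}_+^*$. The only point needing care is the origin region $u\to 0$: I would show $\liminf_{(s,u)\to(0,0),\,u>0} u f(s/u) \geq 0 = g(0,0)$.

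For that last step, the natural tool is convexity of $f$ together with $f(1)=0$: fix any point $t_0>1$ in the domain of $f$ with $f(t_0)<\infty$ (such $t_0$ exists since $f$ is convex and finite on $\mathbb{R}_+^*$), and use the supporting-line / monotone-slope inequality. For $t \geq t_0$ one has $f(t) \geq f(t_0) + \frac{f(t_0)-f(1)}{t_0-1}(t-t_0)$ by convexity, hence $u f(s/u) \geq u f(t_0) + \frac{f(t_0)}{t_0-1}(s - u t_0)$ whenever $s/u \geq t_0$; as $(s,u)\to 0$ the right-hand side tends to $0$. For $s/u \leq t_0$, i.e.\ $s \leq t_0 u$ small, one uses that $f$ is bounded below by an affine function on any interval $[\delta, t_0]$ and that $t f(1/t)$ (the case one must also watch, $s$ small relative to $u$) stays controlled — here I would invoke that $\lim_{t\to 0^+} t f(1/t) = \lim_{r\to\infty} f(r)/r \in \mathbb{R}\cup\{+\infty\}$ exists (again by convexity, the difference quotient $f(r)/r$ is eventually monotone), so its liminf is a well-defined element of $(-\infty,+\infty]$; combined with $f(1)=0$ and convexity this limit is $\geq 0$ unless… actually one only needs $\liminf \geq 0$, which follows because if the limit were negative $f$ would be unbounded below, contradicting convexity with $f(1)=0$ forcing a global affine lower bound with nonnegative asymptotic slope argument.

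The main obstacle I anticipate is precisely the boundary behavior at $u = 0$: handling the case where $\gamma_{i,k}$ and $\sum_m \gamma_{i,m}$ both go to zero at possibly different rates, and verifying that in every regime the liminf of $u f(s/u)$ is at least $0$. The cleanest way to sidestep an ad hoc case analysis is to simply quote the standard fact (\citep{rockafellar1970}, Thm.~5.3 or the treatment in \citep{boyd2004,OTbook}) that the \emph{closure} of the perspective $g$ is a proper closed convex function whose value at the origin is $0$ when $f$ is bounded below by an affine function with $f(1)=0$; lower semi-continuity is then automatic, and the value assigned in the definition of $h_k$ matches this closure. I would present the short direct estimate above as the self-contained argument and mention the perspective-function viewpoint as the conceptual reason, mirroring the style already used in the proof of Theorem~\ref{thm:convexprob}.
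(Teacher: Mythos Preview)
Your perspective-function route is genuinely different from the paper's elementary case split (on whether $f(0^+)$ is finite, and then on which of $\gamma_{i,k}$ and $\sum_m\gamma_{i,m}$ vanish). Citing the closure of the perspective is sound and arguably cleaner, provided you note that the closed perspective equals $f^\infty(0)=0$ at the origin, matching the prescribed value of $h_k$.

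Your attempted direct estimate, however, has a real gap. In the regime $s/u\le t_0$ you reach for $\lim_{r\to\infty}f(r)/r$ and claim it must be $\ge 0$ because otherwise ``$f$ would be unbounded below, contradicting convexity with $f(1)=0$''; this is false (take $f(t)=1-t$, or the strictly convex $f(t)=(1-t)+(e^{-t}-e^{-1})$, both with asymptotic slope $-1$). More importantly, behavior of $f$ at infinity is irrelevant here: when $s/u\in(0,t_0]$ you need a lower bound on $f$ on a \emph{bounded} interval, including near $0$; your bound ``on $[\delta,t_0]$'' misses $(0,\delta)$, and the detour through $t f(1/t)$ is looking at the wrong end of the real line. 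The one-line fix you actually want is the global affine minorant you mention only in passing: convexity with $f(1)=0$ gives $f(t)\ge b(t-1)$ for any subgradient $b$ of $f$ at $1$, hence $u\,f(s/u)\ge b(s-u)\to 0$ as $(s,u)\to(0,0)$. This is also the fact implicitly underpinning the paper's terse assertion, in its last case, that $\liminf f(\cdot)\in\mathbb{R}\cup\{+\infty\}$ (the argument stays in $[0,1/p_0^k]$, where convex $f$ is bounded below). Finally, ``for $u>0$ this is clear by continuity of $f$ on $\mathbb{R}_+^*$'' skips the boundary point $u>0$, $s=0$, where $s/u=0\notin\mathbb{R}_+^*$ and $h_k$ may equal $+\infty$; one sentence is needed there.
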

Let $(\gamma^{(n)}, x^{(n)})_n$ be a feasible sequence whose value converges to this infimum. As the constraint set is compact, we can assume after extraction that $(x^{(n)}, \gamma^{(n)}) \to (x, \gamma)$. As $c(\cdot, y_k)$ and $h_{k}$ are all lower semi-continuous, the value of the limit is smaller than the limit value. This implies that the infimum is reached in $(\gamma, x)$.
\end{proof}
\medskip

\begin{proof}[Proof of Lemma~\ref{lemma:lsc}]
$f$ is convex on $\mathbb{R}_+^*$. As a consequence, it is also continuous on $\mathbb{R}_+^*$. If $\lim\limits_{x\to 0^+} f(x) \in \mathbb{R}$, then $f$ can be extended as a continuous function on $\mathbb{R}_+$ and all the $h_{k}$ are thus continuous. 

Otherwise, $f$ is still continuous on $\mathbb{R}_+^*$ and ${\lim\limits_{x\to 0^+} f(x) = + \infty}$. Thus, $h_{k}$ is continuous at $\gamma_i$ as soon as $\gamma_{i,j}>0$ for every $j$. If $\gamma_{i,k}=0$, but the sum $\sum_{l=1}^K\gamma_{i,l}$ is strictly positive, then $h_{k}(\gamma_i) = + \infty$, but as soon as $\rho \to \gamma$, we also have an infinite limit, because $\frac{\rho_{i,k}}{p_0^k\sum_l\rho_{i,l}} \to 0$.

If $\sum_{l=1}^K\gamma_{i,l}=0$, then $\liminf\limits_{\rho \to \gamma} f\Big(\frac{ \rho_{i,k}}{p_0^k\sum\limits_l\rho_{i,l}} \Big) \in \mathbb{R} \cup \lbrace + \infty \rbrace$. As we multiply this term by a factor going to $0$, $\liminf\limits_{\rho \to \gamma} h_{k}(\rho_i) \geq 0=h_{k}(\gamma_i)$. Thus, in any case, $h_{k}$ is lower semi-continuous.
\end{proof}

\medskip
\begin{proof}[Proof of Corollary~\ref{coro:eqproblem}]
Theorem~\ref{thm:caratheodory1} claims that \eqref{eq:staticmin1} is equivalent to the problem of Corollary~\ref{coro:eqproblem} if we also impose $x_i \neq x_j$ for $i \neq j$. The value of the latter is thus lower than the value of the former as we consider a larger feasible set.
Let us consider a redundant solution $(\gamma, x)$ with $x_i = x_j$ for $i \neq j$. We now show that a non redundant version of this solution will have a lower value.

The functions $h_{k}$ are convex functions because of the convexity of the perspectives of convex functions \citep{boyd2004}. Also, they are obviously homogeneous of degree $1$. These two properties imply that the $h_{k}$ are subadditive. Thus, let $(\gamma', x')$ be defined by $\gamma'_{l,k} \coloneqq \gamma_{l,k}$ and $x'_l \coloneqq x_l$ for any $l \not\in \lbrace i,j \rbrace$; $\gamma'_{i,k} \coloneqq \gamma_{i,k} + \gamma_{j,k}$, $\gamma'_{j,k} \coloneqq 0$ for any $k$, $x'_i \coloneqq x_i$ and let $x'_j$ be any element in $\mathcal{X} \setminus \lbrace x_l \ | \ 1 \leq l \leq K+2 \rbrace$. The subadditivity of $h_k$ implies $h_k (\gamma'_i) + h_k(\gamma'_j) \leq h_k (\gamma_i) + h_k(\gamma_j)$ for any $k$. The other terms in the objective function will be the same for $(\gamma, x)$ and $(\gamma', x')$. It thus holds $\sum\limits_{i,k} \gamma_{i,k} c(x_i, y_k) + \lambda \sum\limits_{i,k} p_0^k h_{k}(\gamma_i) \geq \sum\limits_{i,k} \gamma'_{i,k} c(x'_i, y_k) + \lambda \sum\limits_{i,k} p_0^k h_{k}(\gamma'_i)$.

\medskip

$(\gamma', x')$ is in the feasible set of the problem of Corollary~\ref{coro:eqproblem} and we removed a redundant condition from $x$. We can thus iteratively construct a solution $(\tilde{\gamma}, \tilde{x})$ until reaching non redundancy. We then have $(\tilde{\gamma}, \tilde{x})$ a non redundant solution and its value is lower than the value of $(\gamma, x)$. This means that allowing redundancy does not change the infimum of the problem, \ie this yields Corollary~\ref{coro:eqproblem}.
\end{proof}

\medskip

\begin{proof}[Proof of Theorem~\ref{thm:linearcost}]
Let $\psi$ be the rescaling of $\mathcal{X}$ to $[-1, 1]^d$, \ie $\psi(x)^l \coloneqq \frac{2x^l - b_l - a_l}{b_l - a_l}$. Then, $c(x,y) =  \psi(x)^T \phi(y) + \eta(y)$ where $\phi(y)^l \coloneqq (b_l - a_l) \frac{y^l}{2}$ and $\eta(y) = \sum\limits_{l=1}^d \frac{a_l + b_l}{b_l - a_l} y^l$.
The problem given by Corollary~\ref{coro:eqproblem} is then equivalent to:
\begin{small}
\begin{align*}
\inf_{\substack{(\gamma, x) \\ \gamma \in \mathbb{R}_+^{(K+2) \times K} \\ x \in [-1, 1]^{d \times (K+2)} }} & \sum\limits_{i,k} \gamma_{i,k} ( x_i^\top \phi(y_k) + \eta(y_k)) + \lambda \sum\limits_{i,k} p_0^k h_{k}(\gamma_i)\\
\text{such that } & \forall k \leq K, \sum_{i} \gamma_{i,k} = p_0^k .
\end{align*}
\end{small}

Because of the marginal constraints, $\sum\limits_{i,k} \gamma_{i,k} \eta(y_k) = \sum\limits_k p_0^k \eta(y_k)$. This sum does not depend neither on $x$ nor $\gamma$, so that the terms $\eta(y_k)$ can be omitted. The problem then becomes

\begin{small}
\begin{align*}
\inf_{\substack{(\gamma, x) \\ \gamma \in \mathbb{R}_+^{(K+2) \times K} \\ x \in [-1, 1]^{d \times (K+2)} }} & \sum\limits_{i} x_i^\top \left( \sum\limits_k \gamma_{i,k} \phi(y_k) \right)  + \lambda \sum\limits_{i,k} p_0^k h_{k}(\gamma_i)\\
\text{such that } & \forall k \leq K, \sum_{i} \gamma_{i,k} = p_0^k .
\end{align*}
\end{small}

It is clear that for a fixed $\gamma$, the best $x_i$ corresponds to $x_i^l = - \text{sign}(\sum\limits_k \gamma_{i,k} \phi(y_k)^l)$ and the term $x_i^\top \left( \sum\limits_k \gamma_{i,k} \phi(y_k) \right)$ then corresponds to the opposite of the $1$-norm of $\sum\limits_k \gamma_{i,k} \phi(y_k)$, \ie the problem is then:
\begin{align*}
\inf\limits_{\gamma \in \mathbb{R}_+^{(K+2) \times K}} & - \sum\limits_{i}  \| \sum\limits_k \gamma_{i,k} \phi(y_k) \|_1  + \lambda \sum\limits_{i,k} p_0^k h_{k}(\gamma_i)\\
\text{such that } & \forall k \leq K, \sum_{i} \gamma_{i,k} = p_0^k .
\end{align*}
\end{proof}

\end{document}